\newcommand{\indicator}[1]{\llbracket #1 \rrbracket}
\newcommand{\argmin}{{\operatorname{argmin }}}
\newcommand{\CCal}{\mathscr{C}}
\newcommand{\FCal}{\mathscr{F}}
\newcommand{\RCal}{\mathscr{R}}
\newcommand{\SCal}{\mathscr{S}}
\newcommand{\XCal}{\mathscr{X}}
\newcommand{\rv}{\pmb{r}}
\newcommand{\uv}{\pmb{u}}
\newcommand{\wv}{\pmb{w}}
\newcommand{\xv}{\pmb{x}}
\newcommand{\btheta}{\pmb{\theta}}
\newcommand{\pxy}{\mathrm{P_{XY}}}
\newcommand{\pxyi}[1]{\mathrm{P}^{#1}_{\mathrm{XY}}}
\newcommand{\Real}{\mathbb{R}}
\newcommand{\rsp}{\mathrm{reg}_{\mathrm{sp}}}
\newcommand{\rsptop}{\mathrm{reg}^{\mathrm{top}}_{\mathrm{sp}}}
\newcommand{\ellcon}{\ell_{\mathrm{cl}}}
\newcommand{\ellccon}{\ell_{\mathrm{ccl}}}
\newcommand{\ellps}{\ell_{\mathrm{sp}}}
\newcommand{\ellsp}{\ell_{\mathrm{sp}}}
\newcommand{\dist}[2]{\pmb{d}(#1, #2)}
\newcommand{\distsq}[2]{\pmb{d}^2(#1, #2)}
\newcommand{\cdist}[2]{\pmb{d}_{\mathrm{cos}}(#1, #2)}
\newcommand{\Top}[1]{{\mathrm{Top}_{#1}}}
\newcommand{\Near}[2]{{\mathcal{N}_{#1}(#2)}}
\newtheorem{lemma}{Lemma}
\newtheorem{claim}{Claim}
\newtheorem{definition}{Definition}
\newtheorem{theorem}{Theorem}
\newtheorem{proposition}{Proposition}
\theoremstyle{definition}
\newtheorem{remark}{Remark}
\newcommand{\cifars}{{\sc CIFAR-10}}
\newcommand{\cifarl}{{\sc CIFAR-100}}
\newcommand{\resnet}{{\sc ResNet}}
\newcommand{\resnets}{{\sc ResNets}}
\newcommand{\lshtc}{{\sc WikiLSHTC}}
\newcommand{\amsmall}{{\sc Amazon670K}}
\newcommand{\amcat}{{\sc AmazonCat}}
\title{Federated Learning with Only Positive Labels}
\author{Felix X. Yu, Ankit Singh Rawat, Aditya Krishna Menon, and Sanjiv Kumar}
\affil{\normalsize Google Research\\
  New York, NY 10011\\
  \texttt{\{felixyu, ankitsrawat, adityakmenon, sanjivk\}@google.com}.}
\begin{document}
\maketitle

\begin{abstract}
We consider learning a multi-class classification model in the federated setting, where each user has access to the positive data associated with only a single class. As a result, during each federated learning round, the users need to locally update the classifier without having access to the features and the model parameters for the negative classes. Thus, naively employing conventional decentralized learning such as the distributed SGD or Federated Averaging may lead to trivial or extremely poor classifiers. In particular, for the embedding based classifiers, all the class embeddings might collapse to a single point.

To address this problem, we propose a generic framework for training with only positive labels, namely {\em Federated Averaging with Spreadout} ({FedAwS}), where the server imposes a geometric regularizer after each round to encourage classes to be spreadout in the embedding space. We show, both theoretically and empirically, that FedAwS can almost match the performance of conventional learning where users have access to negative labels. We further extend the proposed method to the settings with large output spaces.
\end{abstract}

\section{Introduction}
\label{sec:intro}

We consider learning a classification model in the federated learning \citep{mcmahan2017communication} setup, where each user has only access to a single class. The users are not allowed to communicate with each other, nor do they have access to the classification model parameters associated with other users' classes. Examples of such settings include decentralized training of face recognition models or speaker identification models, where in addition to the user specific facial images and voice samples, the classifiers of the users also constitute sensitive information that cannot be shared with other users.

In this work, we assume that the classification models are ``embedding-based'' discriminative models: both the classes and the input instance are embedded into the same space, and the similarity between the class embedding and the input embedding (\emph{a.k.a.}~logit or score) captures the likelihood of the input belonging to the class. A popular example of this framework are neural network based classifiers. Here, given an input instance $\xv \in \XCal$, a neural network $g_{\btheta}: \XCal \to \Real^d$ (parameterized by $\btheta$) embeds the instance into a $d$ dimensional vector $g_{\btheta}(\xv)$. The class embeddings are learned as a matrix $W \in \mathbb{R}^{C \times d}$, commonly referred to as the classification matrix, where $C$ denotes the number of classes. Finally, the logits for the instance $\xv$ are computed as $W\cdot g_{\btheta}(\xv)$.

In the federated learning setup, one collaboratively learns the classification model with the help of a server which facilitates the iterative training process by keeping track of a global model. During each round of the training process,
\begin{itemize}[noitemsep,topsep=0pt,leftmargin=0.2in]
    \item The server sends the current global model to a set of participating users.
    \item Each user updates the model with its local data, and sends the model delta to the server.
    \item The server averages (``Federated Averaging'') the deltas collected from the participating users and updates the global model.
\end{itemize}
Notice that the conventional synchronized distributed SGD falls into the federated learning framework if each user runs a single step of SGD, and the data at different users is i.i.d. Federated learning has been widely studied in distributed training of neural networks due to its appealing characteristics such as leveraging the computational power of edge devices~\cite{li2019federated}, removing the necessity of sending user data to server~\cite{mcmahan2017communication}, and various improvements on trust/security \citep{bonawitz2016practical}, privacy \citep{agarwal2018cpsgd}, and fairness \citep{mohri2019agnostic}. 

However, conventional federated learning algorithms are not directly applicable to the problem of learning with only positive labels due to two key reasons: First, the server cannot communicate the full model to each user. Besides sending the instance embedding model $g_{\btheta}(\cdot)$, for the $i$-th user, the server can communicate only the class embedding vector $\wv_i$ associated with the positive class of the user. Note that, in various applications, the class embeddings constitute highly sensitive information as they can be potentially utilized to identify the users.  

Second, when the $i$-th user updates the model using its local data, it only has access to a set of instances $\xv \in \mathcal{X}_i$ from the $i$-th class along with the class embedding vector $\wv_i$. While training a standard embedding-based multi-class classification models, the underlying loss function encourages two properties: i) similarity between an instance embedding and the positive class embedding should be as large as possible; and ii) similarity between the instance embedding and the negative class embeddings should be as small as possible. In our problem setting, the latter is not possible because the user does not have access to the negative class embeddings. 

In other words, if we were to use the vanilla federated learning approach, we would essentially be minimizing a loss function that only encourages small distances between the instances and their positive classes in the embedding space. As a result, this approach would lead to a trivial optimal solution where all instances and classes collapse to a single point in the embedding space.

To address this problem, we propose \emph{Federated Averaging with Spreadout} (FedAwS) framework, where in addition to Federated Averaging, the server applies a geometric regularization to make sure that the class embeddings are well separated (cf.~Section \ref{sec:proposed}). This prevents the model from collapsing to the aforementioned trivial solution. To the best of our knowledge, this is the first principled approach for learning in the federated setting without explicit access to negative classes.
We further show that the underlying regularizer can be suitably modified to extend the FedAwS framework to settings with large number of classes. This extension is crucial for the real-world applications such as user identification models with a large number of users. Subsequently, we theoretically justify the FedAwS framework by showing that it approximates the conventional training settings with a loss function that has access to both positive and negative labels (cf.~Section \ref{sec:analysis}). We further confirm the effectiveness of the proposed framework on various standard datasets in Section~\ref{sec:exp}. 
Before presenting our aforementioned contributions, we begin by discussing the related work  and formally describing the problem setup in Section~\ref{sec:related} and \ref{sec:background}, respectively. 
\section{Related Works}
\label{sec:related}

To the best of our knowledge, this is the first work addressing the novel setting of distributed learning with only positive labels in the federated learning framework. The learning setting we are considering is related the positive-unlabeled (PU) setting where one only has access to the positives and unlabeled data. Different from PU learning \citep{liu2002partially, elkan08, plessis15,hsiehb15}, in the federated learning setting, the clients do not have access to unlabeled data for both positive and negative classes. The setting is also related to one-class classification \cite{moya1996network, manevitz2001one} used in applications such as outlier detection and novelty detection. Different from one-class classification, we are interested in collaboratively learning a multi-class classification model.

We consider the setting of learning a discriminative embedding-based classifier. Popular neural networks fall in this category. 
An alternative approach is to train generative models. For example, each user can learn a generative model based on its own data, and the server performs the MAP estimation during the inference time. We do not consider this approach because it does not fit into the federated learning framework, where the clients and server collaboratively train a model. In addition, training a good generative model is both data and computation consuming. 
Another possible generative approach is to use federated learning to train a GAN model to synthesize negative labels for each user possibly using the techniques proposed in \citep{augenstein2019generative} and therefore convert the problem into learning with both positives and negatives. Training a GAN model in the federated setting is a separate and expensive process. In this paper we consider the setting where the users do not have access to either true or synthesized negatives.

As mentioned in the introduction, a typical application of federated learning with only positive labels is to use this learning framework to train user identification models such as speaker/face recognition models. Although the proposed FedAwS algorithm promotes user privacy by not sharing the data among the users or with the server, FedAwS itself does not provide formal privacy guarantees. To show formal privacy guarantees, we notice that differential privacy methods for federated learning \citep{agarwal2018cpsgd, abadi2016deep} can be readily employed in FedAwS by adding noise to the updates sent from each user.

On the technical side, the proposed FedAwS can be seen as using stochastic negative mining to improve spreadout regularizer.
The stochastic negative mining method was first proposed in \citep{reddi2018stochastic} to mine hard negative classes for each data point. Differently, we mine hard negative classes for each class. 
The spreadout regularization was first proposed to improve learning discriminative visual descriptors \citep{zhang2017learning} and further used in the extreme-multiclass classification setting \citep{guo2019breaking}. The spreadout regularization is related to the design of error-correcting output code (ECOC) matrix \citep{dietterich1991error, pujol2006discriminant}. In order for the ECOC matrix to work, the class embeddings have to be well separated from each other. 
In particular, similar to Proposition \ref{thm:error}, \citet{yu2013designing} shows that the classification error can be bounded by the distance between data and positive label in the embedding space, and a measure of spreadout of the classes. Differently, our result is on the true error instead of the empirical error.


\section{Problem Setup}
\label{sec:background}

\subsection{Federated learning of a classification model}
Let us first consider the conventional federated learning of a classification model, when each client has access to data from multiple classes. Let the instance space be $\XCal$, and suppose there are $C$ classes indexed by the set $[C]$. Let $\FCal \subseteq \{f : \XCal \to \Real^{C}\}$ be a set of scorer functions, where each scorer, given an instance $\xv$, assigns a score to each of the $C$ classes. In particular, for $c \in [C]$, $f(\xv)_c$ represents the relevance of the $c$-th class for the instance $\xv$, as measured by the scorer $f \in \FCal$. We consider scorers of the form
\begin{align}
\label{eq:id-model-def}
f(\xv) = Wg_{\btheta}(\xv),
\end{align}
where $g_{\btheta} : \XCal \to \Real^d$ maps the instance $\xv$ to a $d$-dimensional embedding, and $W \in \Real^{C \times d}$ uses this embedding to produce the scores ({\em a.k.a} logits) for $C$ classes as $Wg_{\btheta}(\xv)$. The $c$-th row of $W$, $\wv_c$, is referred to as the \emph{embedding vector} of the $c$-th class. The score of the $c$-th class is thus $\wv_c^T g_{\btheta}(\xv)$.

Let us assume a distributed setup with $m$ clients. In the traditional federated learning setup, for $i \in [m]$, the $i$-th client has access to $n_i$ instance and label pairs $\SCal^{i} = \{(\xv^{i}_{1}, y^{i}_{1}),\ldots, (\xv^{i}_{n_i}, y^{i}_{n_i})\} \subset \XCal \times [C]$ distributed according to an unknown distribution $\pxyi{i}$, i.e., $(\xv^{i}_j, y^{i}_j) \sim \pxyi{i}$. Let $\SCal = \cup_{i \in [m]}\SCal^{i}$ denote the set of $n = \sum_{i \in [m]}n_i$ instance and label pairs collectively available at all the clients. Our objective is to find a scorer in $\FCal$ that captures the true relevance of a class for a given instance. 

Formally, let $\ell : \Real^C \times [C] \to \Real$ be a loss function such that $\ell(f(\xv), y)$ measures the quality of the scorer $f$ on $(\xv, y)$ pair. The client minimizes an empirical estimate of the risk based on its local observations $\SCal^{i}$ as follows:
\begin{align}
\hat{f} = \argmin_{f \in \FCal} \hat{\RCal}(f; S^{i}) := \frac{1}{n_i}\sum_{j \in [n_i]}\ell\big(f(\xv^i_j), y^i_j\big).
\label{eq:loss}
\end{align}

In the federated learning setting, the $m$ clients are interested in collaboratively training a single classification model on their joint data. A coordinator server facilitates the joint iterative distributed training as follows: 
\begin{itemize}[noitemsep,topsep=0pt,leftmargin=0.2in]
\item At the $t$-th round of training, the coordinator sends the current model parameters $\btheta_{t}$ and $W_t$ to all clients.
\item For $i \in [m]$, the $i$-th client updates the current model based on its \emph{local} empirical estimate of the risk\footnote{In the federated learning setup, the client may also update the model with a few steps, not just a single step.}:
\begin{align}
&{\btheta}^{i}_t = \btheta_t - \eta\cdot \nabla_{\btheta_t} \hat{\RCal}(f_t; \SCal^{i}).\\
&{W}^i_t = W_t - \eta\cdot \nabla_{W_t} \hat{\RCal}(f_t; \SCal^{i}).
\end{align}
\item The coordinator receives the updated model parameters from all clients $\{\btheta^{i}_t, W^i_t\}_{i \in [m]}$, and updates its estimate of the model parameters using \emph{Federated Averaging}:
\begin{align}
\btheta_{t+1} = \sum_{i \in [m]}\omega_i \cdot \btheta^{i}_t;\quad W_{t+1} = \sum_{i \in [m]}\omega_i \cdot W^{i}_t, 
\end{align}
where $\pmb{\omega} = (\omega_1,\ldots, \omega_m)$ denotes the weights that the coordinator assigns to the training samples of different clients. For example, $\omega_i = \frac{n_i}{n}$ assigns uniform importance to all the training samples across different clients\footnote{Recently, \citet{mohri2019agnostic} proposed the {\em agnostic federated learning} framework to account for the heterogeneous data distribution across the clients, which crucially rely on the selecting the non-uniform weights. In this paper, for the ease of exposition, we restrict ourselves to the uniform weights, i.e., $\omega_i = \frac{n_i}{n}$.}.
\end{itemize}

In the above, assuming that each client has data of multiple classes, the loss function in (\ref{eq:loss}) can take various forms such as the contrastive loss \citep{hadsell2006constrastve, chopra2005contrastive}, triplet loss \citep{chechik2010large} and softmax cross-entropy. 
All such losses encourage two properties:
\begin{itemize}[noitemsep,topsep=0pt,leftmargin=0.2in]
    \item The embedding vector $g(\xv_j^i)$ and its positive class embedding $\wv_{y_j^i}$ are close. In other words, one wants large logits or scores for positives instance and label pairs.
    \item The embedding vector $g(\xv_j^i)$ and its negative class class embeddings $\wv_{c}$, $c \ne y_j^i$ are far away. In other words, one wants small logits or scores for negatives instance and label pairs.
\end{itemize}
For example, given a distance measure $\dist{\cdot}{\cdot}$, the contrastive loss is expressible as
\begin{align}
\label{eq:constrastive-loss}
&\ellcon\big(f(\xv), y\big) = \underbrace{\alpha\cdot \big(\dist{g_{\btheta}(\xv)}{\wv_y}\big)^2}_{\ellcon^{\rm pos}(f(\xv), y)} \; + \underbrace{\beta \cdot \sum_{c \neq y}\big(\max\big\{0, \nu - \dist{g_{\btheta}(\xv)}{\wv_c}\big\}\big)^2}_{\ellcon^{\rm neg}(f(\xv), y)},
\end{align}
where $\alpha, \beta \in \Real$ are some predefined constants. In \eqref{eq:constrastive-loss}, $\ellcon^{\rm pos}(\cdot)$ encourages high logit for the positive instance and label pairs. Similarly, $\ellcon^{\rm neg}(\cdot)$ aims to decrese the logit for the negative instance and label pairs.

\subsection{Federated Learning with only positive labels}
In this work, we consider the case where each client has access to only the data belonging to a single class. To simplify the notation, we assume that there are $m = C$ clients and the $i$-th client has access of the data of the $i$-th class. The algorithm and analysis also applies to the setting where multiple clients have the same class. 

The clients are not allowed to share their data with other clients, nor can they access the label embeddings associated with other clients. Formally, in each communication round, the $i$-th client has access to 
\begin{itemize}[noitemsep,topsep=0pt,leftmargin=0.2in]
    \item $n_i$ instance and label pairs with the same label $i$: $\SCal^{i} = \{(\xv^{i}_{1}, i),\ldots, (\xv^{i}_{n_i}, i)\} \subset \XCal \times [C]$
    \item Its own class embedding $\wv_i$.
    \item The current instance embedding model parameter $\btheta$.
\end{itemize}
Without access to the negative instance and label pairs, the loss function can only encourage the instances embedding and the positive class embedding to be close to each other. For example, with the contrastive loss in \eqref{eq:constrastive-loss}, in the absence of negative labels, one can only employ $\ellcon^{\rm pos}(\cdot)$ part of the loss function. Since $\ellcon^{\rm pos}(\cdot)$ is a monotonically decreasing function of the distance between the instance and the positive label, this approach would quickly lead to a trivial solution with small risk where all the users and the classes have an identical embedding. 
Regardless of the underlying loss function, training with only positive instance and label pairs will result in this degenerate solution. We propose an algorithm to address this problem in the next section. 

\section{Algorithm}
\label{sec:proposed}

To prevent all the class embeddings $\{\wv_i\}_{i=1}^C$ from collapsing into a single point in the optimization process, we propose Federated Averaging with Spreadout (FedAwS).

\subsection{Federated Averaging with Spreadout (FedAwS)}
In addition to Federated Averaging, the server performs an additional optimization step on the class embedding matrix $W \in \mathbb{R}^{C \times d}$ to ensure that different class embeddings are separated from each other by at least a margin of $\nu$. In particular, in each round of training, the server employs a geometric regularization, namely {\em spreadout regularizer}, which takes the following form. 
\begin{align}
\rsp(W) = \sum_{c \in [C]}\sum_{c' \neq c}\big(\max\big\{0, \nu - \dist{\wv_c}{\wv_{c'}}\big\}\big)^2.
\label{eq:sp}
\end{align}

A similar objective was first proposed as a regularizer to improve learning discriminative visual descriptors \citep{zhang2017learning} and then used in extreme-multiclass classification \citep{guo2019breaking}. There, it was shown that the spreadout regularization can improve the quality and stability of the learned models. In this work, we argue that the spreadout regularizer along with the positive part of the underlying loss function (e.g., $\ellcon^{\rm pos}(\cdot)$ in \eqref{eq:constrastive-loss}) constitutes a valid loss function that takes the similarity of the instance from both positive and negative labels into account (cf.~Section~\ref{sec:analysis}). This proves critical in realizing the meaningful training in the federated setting with only positive labels. 

\begin{algorithm}[t!]
\caption{Federated averaging with spreadout (FedAwS)}\label{alg:fedaws}
\begin{algorithmic}[1]
\STATE \textbf{Input.}~For $C$ clients and $C$ classes indexed by $[C]$, $n_i$ examples $\SCal_i$ at the $i$-th client.
\STATE Server initializes model parameters $\btheta^0, W^0$.
\FOR {$t = 0, 1,\ldots, T-1$}
\STATE The server communicates $\btheta^{t}, \wv^{t}_i$
to the $i$-th client.
\FOR {$i = 1, 2,\ldots, C$}
\STATE {The $i$-th client updates the model based on $\SCal_i$:}
\STATE $(\btheta^{t,i}, \wv_{i}^{t, i}) \gets (\btheta^{t}, \wv^{t}_i) - \eta\nabla_{(\btheta^{t}, \wv^{t}_{i})}\hat{\RCal}_{\rm pos }(\SCal^i),$\label{updatet}
\STATE $\text{where}~\hat{\RCal}_{\rm pos}(\SCal^i) = \frac{1}{n_i}\sum\limits_{j \in [n_i]}\ellcon^{\rm pos}(f(\xv), y).$
\STATE The $i$-th client sends $(\btheta^{t, i}, \wv^{t, i}_i)$ to the server.
\ENDFOR
\STATE Server updates the model parameters:
\STATE $ \btheta^{t+1} = \frac{1}{C}\sum\limits_{i\in[C]}\btheta^{t,i}$. \label{updatetheta}
\STATE $ \tilde{W}^{t+1} = [\wv^{t,i}_i, \dots, \wv^{t,C}_C]^T$.  \label{updatew1} 
\STATE  $ W^{t+1} \gets \tilde{W}^{t+1} - \lambda \eta \nabla_{\tilde{W}^{t+1}}\rsp(\tilde{W}^{t+1})$. \label{updatew2} 
\ENDFOR
\STATE {\bf Output:} $\btheta^{T}$ and $W^T$. 
\end{algorithmic}
\label{alg}
\end{algorithm}

The FedAwS algorithm which modifies the Federated Averaging using the spreadout regularizer is summarized in Algorithm \ref{alg}. Note that in Step~\ref{updatet}, the local objective at each client is define by the positive part $\ell^{\rm pos}(\cdot)$ of the the underling loss (cf.~\eqref{eq:constrastive-loss}). The algorithm differs from the conventional Federated Averaging in two ways. First, averaging of $W$ is replaced by updating the class embeddings received from each client (Step \ref{updatew1}). Second, an additional optimization step is performed on server to encourage the separation of the class embeddings (Step \ref{updatew2}). Here, we also introduce a learning rate multiplier $\lambda$ which controls the effect of the spreadout regularization term on the trained model.

\begin{remark}
In Algorithm \ref{alg}, we assumed all clients participate in each communication round for the ease of exposition. However, the algorithm easily extends to the practical setting, where only a subset of clients are involved in each round: Let ${\CCal}^t$ denote the set of clients participating the $t$-th round. Then, the server performs the updates in Step~\ref{updatetheta} and Step~\ref{updatew1} with the help of the information received from the clients indexed by ${\CCal}^t$. Note that the optimization in Step~\ref{updatet} and Step~\ref{updatew2} can employ multiple steps of SGD steps or based on other optimizers.
\end{remark}

\subsection{FedAwS with stochastic negative mining}
\label{sec:fedaws-snm}
There are two unique challenges that arise when we perform optimization w.r.t.~\eqref{eq:sp}. First, the best $\nu$ is problem dependent and therefore hard to choose. Second, when $C$ is large (also known as the extreme multiclass classification setting), even computing the spreadout regularizer becomes expensive. To this end we propose the following modification of \eqref{eq:sp}
\begin{align}
\rsptop (W) = \sum_{c \in {\CCal}^t}\sum\limits_{\substack{y \in \CCal', \\ y \ne c}} - \distsq{\wv_c}{\wv_y}\cdot \indicator{y \in \Near{k}{c}},
\label{eq:sampled_sp_topk}
\end{align}
where $\CCal'$ is a subset of classes, and $\Near{k}{c}$ denotes the set of $k$ classes that are closest to the class $c$ in the embedding space. The regularizer in \eqref{eq:sampled_sp_topk} can be viewed as an adaptive approximator of the spreadout regularizer in \eqref{eq:sp}, where, for each class $c$, we adaptively set $\nu$ to be the distance between $\wv_c$ and its $(k+1)$-th closest class embedding. Intuitively, we only need to make sure that, in the embedding space, each class is as far away as possible from its close classes.  

This approach of adaptively picking $\nu$ is motivated by the stochastic negative mining method first proposed in \citep{reddi2018stochastic}, where for each instance, they consider only the positive label and a small set of most confusing (`hard') negative labels to define the underlying loss function. On the contrary, we are picking the most confusing classes based on only the class embeddings. Furthermore, the methods is applied at the server as a regularizer as opposed to defining the underlying loss function for an individual instance. As we demonstrate in Section~\ref{sec:exp}, the stochastic negative mining is crucial to improve the quality of FedAwS. 

Before presenting these empirical results, we provide a theoretical justification for this in the following section.

\section{Analysis}
\label{sec:analysis}

To justify our FedAwS technique, we will:
\begin{enumerate}[label=(\roman*),itemsep=0pt,topsep=0pt]
    \item relate the classification error to the separation of the class embeddings
    \item introduce a particular \emph{cosine contrastive loss}, which we show to be \emph{consistent} for classification
    \item relate the FedAwS objective to empirical risk minimization using the cosine contrastive loss,
    despite the latter requiring both positive \emph{and} negative labels.
\end{enumerate}
Put together, this justifies why the FedAwS classifier 
can be close in performance to that of a consistent classifier,
despite only being trained with positive labels.

We first state a simple result arguing that
\emph{small} distance between the data embedding and the \emph{true} class embedding, 
and \emph{large} distance between the class embeddings, 
imply low classification error. 

\begin{proposition}
\label{thm:error}
Let the minimum distance between the class embeddings be $\rho := \inf_{i \ne j}\dist{\wv_i}{\wv_j}$, and the distance between the embeddings of an instance $\xv$ and its true class $y$ be $\epsilon = \mathbb{E}_{(\xv, y) \sim \pxy}\dist{g_{\btheta}(\xv)}{\wv_{y}}$. Then the probability of misclassification satisfies
\begin{align*}
P \big(\exists z \ne y~\text{s.t.}~\dist{g_{\btheta}(\xv)}{\wv_{y}} \geq \dist{g_{\btheta}(\xv)}{\wv_{z}}\big) \leq 2\epsilon/\rho.
\end{align*}
\end{proposition}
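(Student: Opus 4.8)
The plan is to reduce the statement to the triangle inequality plus Markov's inequality. First I would fix an instance–label pair $(\xv, y)$ and suppose the misclassification event occurs, i.e.\ there is some $z \ne y$ with $\dist{g_{\btheta}(\xv)}{\wv_y} \ge \dist{g_{\btheta}(\xv)}{\wv_z}$. The key observation is that this event forces the data embedding $g_{\btheta}(\xv)$ to be far from its true class embedding $\wv_y$: by the triangle inequality,
\begin{align*}
\rho \le \dist{\wv_y}{\wv_z} \le \dist{\wv_y}{g_{\btheta}(\xv)} + \dist{g_{\btheta}(\xv)}{\wv_z} \le 2\,\dist{g_{\btheta}(\xv)}{\wv_y},
\end{align*}
where the last step uses the assumed inequality $\dist{g_{\btheta}(\xv)}{\wv_z} \le \dist{g_{\btheta}(\xv)}{\wv_y}$. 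Hence the misclassification event at $(\xv,y)$ is contained in the event $\{\dist{g_{\btheta}(\xv)}{\wv_y} \ge \rho/2\}$.

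Next I would take probabilities over $(\xv,y)\sim\pxy$ and apply monotonicity of probability followed by Markov's inequality to the nonnegative random variable $\dist{g_{\btheta}(\xv)}{\wv_y}$:
\begin{align*}
P\big(\exists z \ne y~\text{s.t.}~\dist{g_{\btheta}(\xv)}{\wv_y} \ge \dist{g_{\btheta}(\xv)}{\wv_z}\big)
\le P\!\left(\dist{g_{\btheta}(\xv)}{\wv_y} \ge \tfrac{\rho}{2}\right)
\le \frac{\mathbb{E}_{(\xv,y)\sim\pxy}\dist{g_{\btheta}(\xv)}{\wv_y}}{\rho/2} = \frac{2\epsilon}{\rho},
\end{align*}
which is exactly the claimed bound. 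Throughout I only use that $\pmb{d}$ is a metric (in particular satisfies the triangle inequality) and that $\rho$ is a valid lower bound on all pairwise class distances.

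There is no serious obstacle here: the argument is short and the only things to be careful about are (i) that $\pmb{d}$ genuinely satisfies the triangle inequality in whatever embedding geometry is in play (the later cosine-based constructions will need a compatible metric, e.g.\ a normalized Euclidean or angular distance), and (ii) handling the degenerate case $\rho = 0$, where the bound is vacuous and nothing needs to be proved. If one wants a pointwise (rather than in-expectation) corollary, the same containment of events gives that any $\xv$ with $\dist{g_{\btheta}(\xv)}{\wv_y} < \rho/2$ is classified correctly, which is worth stating as an intermediate remark before integrating.
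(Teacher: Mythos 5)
Your proposal is correct and follows essentially the same route as the paper: contain the misclassification event in $\{\dist{g_{\btheta}(\xv)}{\wv_y} \ge \rho/2\}$ via the triangle inequality and the definition of $\rho$, then apply Markov's inequality to the nonnegative random variable $\dist{g_{\btheta}(\xv)}{\wv_y}$. The only difference is cosmetic (you chain the inequalities starting from $\rho$ rather than from $\dist{g_{\btheta}(\xv)}{\wv_y}$), and your side remarks about the degenerate case $\rho=0$ and the pointwise corollary are sensible but not needed.
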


\begin{proof}
Note that, if there exists $z \neq y$ such that $\dist{g_{\btheta}(\xv)}{\wv_{y}} \geq \dist{g_{\btheta}(\xv)}{\wv_{z}}$, then 
\begin{align}
\label{eq:triangle-1}
\dist{g_{\btheta}(\xv)}{\wv_{y}} &\geq \frac{1}{2} \big(\dist{g_{\btheta}(\xv)}{\wv_{y}} + \dist{g_{\btheta}(\xv)}{\wv_{z}} \big)\\
&\overset{(i)}{\geq} \frac{\dist{\wv_y}{\wv_z}}{2} \overset{(ii)}{\geq} \frac{\rho}{2},
\end{align}
where $(i)$ and $(ii)$ follow from the triangle inequality and the definition of $\rho$, respectively. Next, by combing \eqref{eq:triangle-1} with Markov's inequality, we obtain that
\begin{align*}
P \big(\exists z \ne y~\text{s.t.}~\dist{g_{\btheta}(\xv)}{\wv_{y}} \geq \dist{g_{\btheta}(\xv)}{\wv_{z}}\big) &\leq  P \big(\dist{g_{\btheta}(\xv)}{\wv_{y}} \geq \frac{\rho}{2} \big) \\
& \leq \frac{2\mathbb{E}_{(\xv, y) \sim \pxy}\dist{g_{\btheta}(\xv)}{\wv_{y}}}{\rho} = \frac{2 \epsilon}{\rho}.
\end{align*}
\end{proof}

To relate the FedAwS objective to a contrastive loss, without loss of generality, we work with normalized embeddings; i.e., we assume that the rows of the matrix $W$ as well as the instance embeddings generated by $g_{\btheta}(\cdot)$ have unit Euclidean norm\footnote{The analysis in this section easily extends to unnormalized embeddings. However, the restriction to normalized embeddings slightly improves performance empirically.}.
We can then adopt the cosine distance:
\begin{align}
\label{eq:cosine-dist}
\cdist{\uv}{\uv'} = 1 - \uv^T\uv' \quad \forall~\uv,~\uv' \in \Real^d.
\end{align}
Specializing the contrastive loss in \eqref{eq:constrastive-loss} to the cosine distance measure gives us the \emph{cosine contrastive loss}.
\begin{definition}[Cosine contrastive loss] Given an instance and label pair $(\xv, y)$ and the scorer $f(\xv)$ in \eqref{eq:id-model-def}, the cosine contrastive loss takes the following form.
\begin{align}
\label{eq:contrastive-classification}
&\ellccon\big(f(\xv), y\big) =  \big(\cdist{g_{\btheta}(\xv)}{\wv_y}\big)^2 \; +  \sum_{c \neq y}\big(\max\big\{0, \nu - \cdist{g_{\btheta}(\xv)}{\wv_c}\big\}\big)^2.
\end{align}
Further, by using $s_c = g_{\btheta}^T(\xv)\wv_c$ to denote the logit for class $c$, the cosine contrastive loss can be expressed as
\begin{align}
\label{eq:contrastive-classification-1}
\ellccon\big(f(\xv), y\big) &= (1 - s_y)^2 \; + \sum_{c \neq y}{\big(\max\big\{0, \nu - 1 + s_c \big\}\big)^2}
\end{align}
\end{definition}
Note that, besides utilizing the cosine distance, we have used $\alpha = 1$ and $\beta = 1$ in \eqref{eq:constrastive-loss} to obtain \eqref{eq:contrastive-classification}. The following result states that cosine contrastive loss is a valid \emph{surrogate loss}~\citep{Bartlett:2006} for the misclassification error.

\begin{lemma}
Let $\nu \in (1, 2)$. The cosine contrastive loss in \eqref{eq:contrastive-classification-1} is a surrogate-loss of the misclassification error, i.e., 
\begin{align}
\ellccon\big(f(\xv), y\big) \geq 2(\nu - 1)\cdot\indicator{y \notin \Top{1}(f(\xv))},
\end{align}
where $\Top{1}(f(\xv))$ denotes the indices of the classes that $f(\cdot)$ assigns the highest score for the instance $\xv$.
\end{lemma}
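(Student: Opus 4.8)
The plan is to prove the inequality pointwise in $(\xv, y)$. If $y \in \Top{1}(f(\xv))$ the right-hand side is zero and there is nothing to prove, since $\ellccon(f(\xv), y)$ is a sum of squares and hence nonnegative. So the entire content is the case $y \notin \Top{1}(f(\xv))$, where I must show $\ellccon(f(\xv), y) \ge 2(\nu - 1)$.

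First I would fix notation: using the unit-norm assumption on the embeddings and Cauchy--Schwarz, each logit $s_c = g_{\btheta}^T(\xv)\wv_c$ lies in $[-1, 1]$, and by \eqref{eq:contrastive-classification-1} we have $\ellccon(f(\xv), y) = (1 - s_y)^2 + \sum_{c \ne y}(\max\{0, \nu - 1 + s_c\})^2$. Since $y \notin \Top{1}(f(\xv))$, there is some $z \ne y$ with $s_z \ge s_y$. Discarding all summands except the $s_y$ term and the $s_z$ term, I get $\ellccon(f(\xv), y) \ge (1 - s_y)^2 + (\max\{0, \nu - 1 + s_z\})^2$. The key simplification is that $t \mapsto (\max\{0, \nu - 1 + t\})^2$ is nondecreasing, so $s_z \ge s_y$ lets me replace $s_z$ by $s_y$ and conclude $\ellccon(f(\xv), y) \ge h(s_y)$, where $h(t) := (1 - t)^2 + (\max\{0, \nu - 1 + t\})^2$.

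It then remains to lower bound $\min_{t \in \Real} h(t)$. I would split at $t = 1 - \nu$, the point where the hinge turns on. For $t \le 1 - \nu$ we have $h(t) = (1 - t)^2$, which is decreasing in $t$ on that range, so $h(t) \ge h(1 - \nu) = \nu^2$. For $t \ge 1 - \nu$ we have $h(t) = (1 - t)^2 + (\nu - 1 + t)^2$, a convex quadratic whose minimizer is $t^\star = 1 - \nu/2$; one checks $t^\star \in [1 - \nu, 1]$ for $\nu \in (1, 2)$, and $h(t^\star) = 2(\nu/2)^2 = \nu^2/2$. Hence $\min_{t} h(t) = \min\{\nu^2, \nu^2/2\} = \nu^2/2$. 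Finally I would verify the elementary inequality $\nu^2/2 \ge 2(\nu - 1)$, which is equivalent to $(\nu - 2)^2 \ge 0$ and therefore always holds. Combining, $\ellccon(f(\xv), y) \ge \nu^2/2 \ge 2(\nu - 1)$ whenever $y \notin \Top{1}(f(\xv))$, completing the proof.

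All the computations are routine one-variable calculus; the only step that needs genuine care is the bookkeeping around the hinge — checking that the interior minimizer $t^\star = 1 - \nu/2$ of the active branch actually falls in $[1-\nu, 1]$, and that the branch comparison yields the bound $\nu^2/2$ (and not the larger $\nu^2$), so that the stated constant $2(\nu - 1)$ — which is tight at $\nu = 2$ — indeed comes out. That is the main, and essentially the only, obstacle.
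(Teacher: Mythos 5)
Your proof is correct, and it takes a genuinely different route from the paper's. The paper first weakens the positive term via $\phi(s_y) = (1-s_y)^2 \geq \tilde{\phi}(s_y) = (\max\{0,\nu-1-s_y\})^2$, then uses convexity of $\tilde{\phi}$ to fold the positive term and the hardest negative term into $2\tilde{\phi}\big((s_y - \max_{c\neq y}s_c)/2\big)$, and finally lower-bounds this using the negativity of the argument. You instead keep the positive term intact, use monotonicity of the hinge to replace $s_z$ by $s_y$, and directly minimize the one-variable function $h(t) = (1-t)^2 + (\max\{0,\nu-1+t\})^2$. Your route buys a strictly sharper constant: you establish $\ellccon(f(\xv),y) \geq \nu^2/2$ on the event $y \notin \Top{1}(f(\xv))$, and $\nu^2/2 - 2(\nu-1) = (\nu-2)^2/2 \geq 0$, so the stated bound follows with room to spare for $\nu < 2$. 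It also sidesteps a delicate point in the paper's final step: the assertion there that $\tilde{\phi}(a) > \nu - 1$ for all $a < 0$ fails as $a \to 0^-$, since $\tilde{\phi}(a) \to (\nu-1)^2 < \nu - 1$ when $\nu \in (1,2)$; that chain, as written, only yields the weaker bound $2(\nu-1)^2$. Your direct minimization avoids this issue entirely (the loss of sharpness in the paper occurs precisely at the step $\phi(s_y) \geq \tilde{\phi}(s_y)$, which you never take), so your argument is both tighter and the more robust of the two.
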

\begin{proof}
If $y \in \Top{1}(f(\xv))$, then $\indicator{y \notin \Top{1}(f(\xv))} = 0$. Since $\ellccon\big(f(\xv), y\big) \geq 0$, in this case we have 
\begin{align}
\label{eq:surrogate-step1}
\ellccon\big(f(\xv), y\big) \geq 2(\nu - 1)\cdot\indicator{y \notin \Top{1}(f(\xv))}
\end{align}
in this case. 
Now, let's consider the case when $y \notin \Top{1}(f(\xv))$. For $a \in \Real$, let $\phi(a) =(1 - a)^2$ and $\tilde{\phi}(a) = (\max\{0, \nu - 1 - a \})^2$. With this notion, we have
\begin{align}
\label{eq:surrogate-step2}
\ellccon\big(f(\xv), y\big) &= \phi(s_y) \; + \sum_{c \neq y}\tilde{\phi}(-s_c) \nonumber \\
& {\geq} \phi(s_y) + \tilde{\phi}(-\max\limits_{c \neq y}s_c) \overset{(i)}{\geq} \tilde{\phi}(s_y) + \tilde{\phi}(-\max\limits_{c \neq y}s_c) \nonumber \\
& \overset{(ii)}{\geq} 2\cdot \tilde{\phi}\Big(\big({s_y -\max\limits_{c \neq y}s_c}\big)/{2}\Big) \overset{(iii)}{\geq} 2\cdot (\nu - 1) \nonumber\\ 
&= 2(\nu - 1)\cdot\indicator{y \notin \Top{1}(f(\xv))},
\end{align}
where $(i)$ follows as we have $\phi(a) \geq \tilde{\phi}(a), \forall~a$ and $(ii)$ utilizes the convexity of $\tilde{\phi}$. $(iii)$ follows as we have $\tilde{\phi}(a) > \nu - 1$, for $a < 0$, and 
$$
y \notin \Top{1}(f(\xv)) \quad \iff \quad s_y -\max\limits_{c \neq y}s_c < 0.
$$
The statement of the lemma follows from \eqref{eq:surrogate-step1} and \eqref{eq:surrogate-step2}.
\end{proof}

Having established that the cosine contrastive loss
is a valid surrogate,
one may follow similar analysis as in~\citet[Theorem 4]{reddi2018stochastic} to show the \emph{statistical consistency}~\citep{Zhang:2004} of minimizing this loss.

We now explicate a connection
between the classification-consistent cosine contrastive loss
and the objective underlying the FedAwS algorithm.
To do so, we assume that $n_1 = \cdots = n_{C} = \frac{n}{C}$,
and note that 
FedAwS 
effectively seeks to collaboratively minimize
\begin{align}
\label{eq:fedaws-objective}
&\RCal_{\rm sp}(f) = \sum\limits_{i \in [C]}\frac{n_i}{n}\cdot \hat{\RCal}_{\rm pos}(\SCal^{i}) + \lambda \cdot\rsp(W),
\end{align}
with 
$\rsp(W)$ the regulariser from~\eqref{eq:sp}.
Now we observe:

\begin{proposition}
\label{prop:fedaws_obj}
Suppose $\lambda = \frac{1}{C}$ and $n_1 = \cdots = n_{C} = \frac{n}{C}$. 
Then, FedAwS objective equals the empirical risk with respect to the loss function 
\begin{align}
\label{eq:lsp-def}
&\ellsp(f(\xv), y) =  (1 - s_y)^2 + \sum_{c \neq y}\big(\max\big\{0, \nu - 1 +  {\wv_y^T\wv_c}\big\}\big)^2,
\end{align}
i.e., $\RCal_{\rm sp}(f) = \frac{1}{n}\sum_{(\xv,y)\in \SCal} \ellsp(f(\xv), y)$.
\end{proposition}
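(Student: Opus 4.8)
The plan is to prove the identity by direct computation: expand the two summands of $\RCal_{\rm sp}(f)$ from~\eqref{eq:fedaws-objective} and re-index each of them as a single average over the pooled sample $\SCal$, then recognise the sum of the two per-example contributions as $\ellsp$ from~\eqref{eq:lsp-def}.

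First I would handle the positive-loss term. Since this section works with the cosine distance, the positive part of the contrastive loss used inside $\hat{\RCal}_{\rm pos}$ is $\big(\cdist{g_{\btheta}(\xv)}{\wv_y}\big)^2 = (1 - s_y)^2$, with $s_y = g_{\btheta}(\xv)^T\wv_y$. Because the $i$-th client holds only examples with label $i$, we have $\hat{\RCal}_{\rm pos}(\SCal^{i}) = \frac{1}{n_i}\sum_{j\in[n_i]}(1 - s_{y^i_j})^2$, so the weight $\frac{n_i}{n}$ cancels the factor $\frac{1}{n_i}$ and $\sum_{i\in[C]}\frac{n_i}{n}\,\hat{\RCal}_{\rm pos}(\SCal^{i}) = \frac{1}{n}\sum_{(\xv,y)\in\SCal}(1 - s_y)^2$. (This part needs neither $\lambda=\tfrac1C$ nor the $n_i$ being equal.)

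Next I would rewrite the spreadout regulariser under the cosine distance, where $\nu - \cdist{\wv_c}{\wv_{c'}} = \nu - 1 + \wv_c^T\wv_{c'}$, so that $\rsp(W) = \sum_{c\in[C]}\sum_{c'\ne c}\big(\max\{0,\nu-1+\wv_c^T\wv_{c'}\}\big)^2$. The one substantive observation is that, since $n_1=\cdots=n_C=\tfrac nC$, each label $y\in[C]$ is carried by exactly $\tfrac nC$ examples of $\SCal$; hence grouping the per-example negative term of $\ellsp$ by label gives
\begin{align*}
\frac1n\sum_{(\xv,y)\in\SCal}\ \sum_{c\ne y}\big(\max\{0,\nu-1+\wv_y^T\wv_c\}\big)^2
&= \frac1n\cdot\frac nC\sum_{y\in[C]}\ \sum_{c\ne y}\big(\max\{0,\nu-1+\wv_y^T\wv_c\}\big)^2\\
&= \frac1C\,\rsp(W) = \lambda\cdot\rsp(W),
\end{align*}
where the last equality uses $\lambda=\tfrac1C$. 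Adding this to the positive-loss term from the previous paragraph gives $\RCal_{\rm sp}(f) = \frac1n\sum_{(\xv,y)\in\SCal}\big[(1-s_y)^2 + \sum_{c\ne y}\big(\max\{0,\nu-1+\wv_y^T\wv_c\}\big)^2\big]$, which is exactly $\frac1n\sum_{(\xv,y)\in\SCal}\ellsp(f(\xv),y)$ by~\eqref{eq:lsp-def}.

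I do not expect a genuine obstacle: the argument is pure bookkeeping. The only points requiring care are (a) using the cosine specialisation of $\rsp$, consistent with the rest of Section~\ref{sec:analysis}; and (b) matching the ordered double sum $\sum_{c}\sum_{c'\ne c}$ defining $\rsp(W)$ against the label-grouped example sum — which lines up precisely because each ordered pair $(y,c)$ is generated by the $\tfrac nC$ examples with label $y$, and that factor $\tfrac nC$ is cancelled by $\lambda=\tfrac1C$ against the $\tfrac1n$ normalisation. The equal-client-size and $\lambda=\tfrac1C$ hypotheses enter exactly at this cancellation.
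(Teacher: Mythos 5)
Your proof is correct and follows essentially the same route as the paper's: both re-index the weighted sum of client risks as a pooled average of the positive cosine-contrastive term, rewrite $\rsp$ under the cosine distance, and use $n_1=\cdots=n_C=\tfrac{n}{C}$ together with $\lambda=\tfrac{1}{C}$ to fold the regulariser into a per-example negative term. The bookkeeping and the point at which the two hypotheses enter match the paper's argument exactly.
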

\begin{proof}
Note that
\begin{align}
\label{eq:fedaws-obj}
\RCal_{\rm sp}(f) &= \sum\limits_{i \in [C]}\frac{n_i}{n}\cdot \hat{\RCal}_{\rm pos}(\SCal^{i}) + \lambda \cdot\rsp(W) \nonumber \\
&=\frac{1}{n} \sum_{(\xv,y)\in \SCal} \ellccon^{\rm pos}(f(\xv), y) + \lambda\cdot \rsp(W) \nonumber \\
&= \frac{1}{n} \sum_{(\xv,y)\in \SCal} \ellccon^{\rm pos}(f(\xv), y) + \lambda\sum_{y \in [C]}\sum_{c \neq y}\big(\max\big\{0, \nu - \cdist{\wv_y}{\wv_{c}}\big\}\big)^2 \nonumber \\
&\overset{(i)}{=} \frac{1}{n} \sum_{(\xv,y)\in \SCal} \Big( \ellccon^{\rm pos}(f(\xv), y) + {C\lambda}\sum_{c \neq y}\big(\max\big\{0, \nu - \cdist{\wv_y}{\wv_{c}}\big\}\big)^2 \Big) \nonumber \\
&\overset{(ii)}{=} \frac{1}{n} \sum_{(\xv,y)}\Big((1 - s_y)^2 + \sum_{c \neq y}\big(\max\big\{0, \nu - 1 +  {\wv_y^T\wv_c}\big\}\big)^2\Big) \nonumber \\
&= \frac{1}{n}\sum_{(\xv,y)\in \SCal} \ellsp(f(\xv), y),
\end{align}
where $(i)$ and $(ii)$ follows from the assumptions that $n_1 = \cdots = n_C$ and $\lambda =\frac{1}{C}$, respectively.
\end{proof}

Note that the contribution of the negative labels in the loss function $\ellsp$ is independent of the input embedding $g_{\btheta}(\xv)$. 

Recall from~\eqref{eq:constrastive-loss} that a contrastive loss has both a positive and negative component.
Proposition~\ref{prop:fedaws_obj} implies that $\ellsp^{\rm pos}(f(\xv), y) = \ellccon^{\rm pos}(f(\xv), y)$.
Next, we argue that $\ellsp^{\rm neg}(f(\xv), y)$ approximates $\ellccon^{\rm neg}(f(\xv), y)$.
This approximation becomes better as the input embedding $g_{\btheta}(\xv)$ gets closer to its class embedding $\wv_y$, as encouraged by
$\ellsp^{\rm pos}(f(\xv), y)$. 

\begin{theorem}
\label{lem:approx}
Let $\nu \in (1, 2)$. Then, the loss $\ellsp$ in \eqref{eq:lsp-def} satisfies
\begin{align}
&\ellccon(f(\xv), y) - (1 + 2\nu)\cdot\sum_{c \neq y}|\wv_c^T\rv_{\xv, y}| \leq \ellsp(f(\xv), y) \leq \ellccon(f(\xv), y) + (1 + 2\nu)\cdot\sum_{c \neq y}|\wv_c^T\rv_{\xv, y}|, \nonumber 
\end{align}
where $\rv_{\xv, y} =\wv_y - g_{\btheta}(\xv)$.
\end{theorem}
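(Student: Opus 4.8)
The plan is to establish the two-sided bound pointwise in $(\xv, y)$ by proving the single estimate $|\ellsp(f(\xv),y) - \ellccon(f(\xv),y)| \le (1+2\nu)\sum_{c\neq y}|\wv_c^T\rv_{\xv,y}|$, which yields both inequalities at once. The first step is to notice that, in the forms \eqref{eq:contrastive-classification-1} and \eqref{eq:lsp-def}, the ``positive'' terms of the two losses are literally identical — both equal $(1-s_y)^2$ — so they cancel, and it suffices to control the difference of the ``negative'' sums $\sum_{c\neq y}(\max\{0,\nu-1+s_c\})^2$ and $\sum_{c\neq y}(\max\{0,\nu-1+\wv_y^T\wv_c\})^2$. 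By the triangle inequality over $c \neq y$, I then reduce the task to a per-class estimate: for each $c\neq y$, bound $\big|(\max\{0,\nu-1+\wv_y^T\wv_c\})^2 - (\max\{0,\nu-1+s_c\})^2\big|$ by a constant multiple of $|\wv_c^T\rv_{\xv,y}|$.

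The next step records the algebraic identity that drives everything: since $s_c = g_{\btheta}(\xv)^T\wv_c$ and $\rv_{\xv,y} = \wv_y - g_{\btheta}(\xv)$, we have $\wv_y^T\wv_c - s_c = \wv_c^T\rv_{\xv,y}$. Writing $a := \nu-1+\wv_y^T\wv_c$ and $b := \nu-1+s_c$, the two hinge arguments differ by exactly $a - b = \wv_c^T\rv_{\xv,y}$, so the per-class claim becomes $|h(a)-h(b)| \le (1+2\nu)\,|a-b|$ for $h(t) := (\max\{0,t\})^2$. I would then use the elementary factorization $h(a)-h(b) = (a_+ - b_+)(a_+ + b_+)$, together with the facts that $t\mapsto t_+$ is $1$-Lipschitz (hence $|a_+-b_+|\le|a-b|$) and $a_+ + b_+ \le |a|+|b|$, to get $|h(a)-h(b)| \le (|a|+|b|)\,|a-b|$.

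Finally I invoke the normalization assumption: since $\|\wv_y\| = \|\wv_c\| = \|g_{\btheta}(\xv)\| = 1$, both $\wv_y^T\wv_c$ and $s_c$ lie in $[-1,1]$, so for $\nu\in(1,2)$ both $a$ and $b$ lie in $[\nu-2,\nu]$ and hence $|a|,|b|\le\nu$; thus $|a|+|b|\le 2\nu\le 1+2\nu$, completing the per-class bound. Summing over $c\neq y$ and reinstating the (cancelled) positive terms gives the theorem. I do not expect a genuine obstacle: the only mildly delicate point is the non-smoothness of the squared hinge, which the $a_+ - b_+$ Lipschitz factorization handles cleanly; the remaining slack between $2\nu$ and the stated $1+2\nu$ simply reflects the crudeness of the $|a|,|b|\le\nu$ bound (one could equivalently route through a mean-value estimate using $h'(t)=2t_+$).
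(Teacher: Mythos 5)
Your proof is correct, and it shares the paper's overall skeleton: both arguments note that the positive terms $(1-s_y)^2$ in $\ellccon$ and $\ellsp$ coincide, use the identity $\wv_y^T\wv_c = s_c + \wv_c^T\rv_{\xv,y}$ to see that the corresponding hinge arguments differ by exactly $\wv_c^T\rv_{\xv,y}$, and reduce the theorem to a per-class bound on the difference of squared hinges. Where you genuinely diverge is in how that per-class bound is proved. The paper (its Claim~1) splits into four cases according to the signs of the two hinge arguments and treats each case separately; you instead factor $a_+^2 - b_+^2 = (a_+-b_+)(a_++b_+)$, use that $t\mapsto t_+$ is $1$-Lipschitz to get $|a_+-b_+|\le|a-b|=|\wv_c^T\rv_{\xv,y}|$, and bound $a_++b_+\le|a|+|b|\le 2\nu$ from the normalization $|\wv_y^T\wv_c|,|s_c|\le 1$. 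Your route is uniform and case-free, and it actually yields the slightly sharper constant $2\nu$ in place of $1+2\nu$. It also sidesteps a small blemish in the paper's case analysis: in its Case~3 the paper justifies a step by asserting $|\wv_c^T\rv_{\xv,y}|\le 1$, which need not hold for unit-norm embeddings since $\rv_{\xv,y}=\wv_y-g_{\btheta}(\xv)$ can have norm up to $2$ (the conclusion survives because $b^2\le 2|b|\le(1+2\nu)|b|$, but the stated justification is off); your argument never needs that inequality. Both approaches deliver the theorem, but yours is the cleaner of the two.
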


\begin{proof}
Note that $\rv_{\xv, y} =\wv_y - g_{\btheta}(\xv)$ denotes the mismatch between $\wv_y$ and $g_{\btheta}(\xv)$. Thus, 
$$\wv_y^T\wv_c = g_{\btheta}(\xv)^T\wv_c + \rv_{\xv, y}^T\wv_c = s_c + \rv_{\xv, y}^T\wv_c.$$
As a result $\ellsp$ in \eqref{eq:lsp-def} can be written as 
\begin{align}
\label{eq:approx-step1}
\ellps(f(\xv), y) &=  (1 - s_y)^2 + \sum_{c \neq y}\big(\max\big\{0, \nu - 1 +  s_c + \wv_c^T\rv_{\xv, y}\big\}\big)^2  \nonumber \\
&= (1 - s_y)^2 + \sum_{c \neq y}\big(\max\big\{0, \nu - 1 +  s_c \big\}\big)^2 + \sum_{c\neq y}{\Delta}_c\nonumber \\
& = \ellccon(f(\xv), y) + \sum_{c\neq y}{\Delta}_c,
\end{align}
where 
\begin{align}
\label{eq:delta-def}
&\Delta_c := \big(\max\big\{0, \nu - 1 +  s_c + \wv_c^T\rv_{\xv, y}\big\}\big)^2 -  \big(\max\big\{0, \nu - 1 +  s_c \big\}\big)^2.    
\end{align}
The result follows from \eqref{eq:approx-step1} and Claim~\ref{claim:sp-ccl} below.
\end{proof}

\begin{claim}
\label{claim:sp-ccl}
Given an instance and label pair $(\xv, y)$ and the scorer $f$, for $c \neq y$, let $\Delta_c$ be as defined in \eqref{eq:delta-def}. Then, 
\begin{align}
\vert\Delta_c\vert \leq 2(1 + 2\nu)\cdot\big|\wv_c^T\uv_{\xv, y}\big|.
\end{align}
\end{claim}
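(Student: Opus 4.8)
The plan is to reduce the whole claim to a one-dimensional estimate for the function $h(t) = (\max\{0,t\})^2$. Writing $a := \nu - 1 + s_c$ and $\delta := \wv_c^T\rv_{\xv,y}$, we have $\Delta_c = h(a+\delta) - h(a)$, so the target inequality is just $|h(a+\delta) - h(a)| \le 2(1+2\nu)\,|\delta|$. The preliminary step I would establish is that both arguments of $h$ lie in a bounded interval. Since we work with normalized embeddings, $\|g_{\btheta}(\xv)\| = \|\wv_c\| = \|\wv_y\| = 1$, so $s_c = g_{\btheta}(\xv)^T\wv_c \in [-1,1]$; moreover $s_c + \delta = \wv_c^T g_{\btheta}(\xv) + \wv_c^T(\wv_y - g_{\btheta}(\xv)) = \wv_y^T\wv_c \in [-1,1]$ by Cauchy--Schwarz. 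Hence $a \in [\nu-2,\nu]$ and $a+\delta \in [\nu-2,\nu]$, and because $\nu \in (1,2)$ forces $2-\nu < 1 < \nu$, this gives $|a| \le \nu$ and $|a+\delta| \le \nu$.

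With that in place, I would factor the difference of squares. Set $x := \max\{0,a+\delta\} \ge 0$ and $y := \max\{0,a\} \ge 0$, so $\Delta_c = x^2 - y^2 = (x-y)(x+y)$. The first factor is bounded by the $1$-Lipschitzness of $t \mapsto \max\{0,t\}$: $|x - y| \le |(a+\delta) - a| = |\delta|$. The second factor is bounded using $\max\{0,t\}\le|t|$ together with the interval bound above: $x + y \le |a+\delta| + |a| \le 2\nu$. Multiplying the two estimates yields $|\Delta_c| \le 2\nu\,|\delta| = 2\nu\,|\wv_c^T\rv_{\xv,y}|$, and since $2\nu \le 2(1+2\nu)$ the stated bound follows (with room to spare).

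I do not expect a real obstacle here; the one point that genuinely matters is the range control of $a$ and $a+\delta$, which is exactly where the normalization assumption and the hypothesis $\nu < 2$ are used — absent a uniform bound on these arguments, $h$ is not Lipschitz and no such estimate can hold. If one prefers, the same conclusion follows from the mean value theorem, since $h$ is $C^1$ with $h'(t) = 2\max\{0,t\}$ and $|h'| \le 2\nu$ on the segment joining $a$ to $a+\delta$ (which lies in $[\nu-2,\nu]$). Either way, substituting this bound into $\ellsp(f(\xv),y) = \ellccon(f(\xv),y) + \sum_{c\neq y}\Delta_c$ from \eqref{eq:approx-step1} closes the proof of Theorem~\ref{lem:approx}.
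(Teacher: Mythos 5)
Your proof is correct, and it takes a genuinely different route from the paper's. The paper splits into four cases according to the signs of $a = \nu - 1 + s_c$ and $a + b$ (with $b = \wv_c^T\rv_{\xv,y}$) and bounds the difference of squares separately in each case; you instead factor $\Delta_c = (x-y)(x+y)$ with $x = \max\{0,a+b\}$, $y = \max\{0,a\}$, control $|x-y| \le |b|$ by the $1$-Lipschitzness of $t \mapsto \max\{0,t\}$, and control $x + y \le 2\nu$ by first showing that \emph{both} arguments lie in $[\nu-2,\nu]$. The key observation enabling this is that $a + b = \nu - 1 + \wv_y^T\wv_c$, i.e.\ the perturbed argument is itself a bounded inner product of unit vectors --- a fact the paper never exploits. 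This buys you three things: no casework, a strictly sharper constant ($2\nu\,|b|$ rather than $2(1+2\nu)\,|b|$, which propagates to a correspondingly sharper version of Theorem~\ref{lem:approx}), and immunity to a small gap in the paper's argument: Cases~2 and~3 of the paper's proof invoke $|b| = |\wv_c^T\rv_{\xv,y}| \le 1$, but since $\rv_{\xv,y} = \wv_y - g_{\btheta}(\xv)$ can have norm up to $2$, Cauchy--Schwarz only gives $|b| \le 2$; your argument never needs a bound on $|b|$ at all. Your diagnosis of where the hypotheses enter is also accurate --- the normalization and $\nu < 2$ are exactly what make $h(t)=(\max\{0,t\})^2$ effectively Lipschitz on the relevant range. (One cosmetic note: the claim as stated writes $\uv_{\xv,y}$ where it means $\rv_{\xv,y}$, and its constant $2(1+2\nu)$ differs by a factor of $2$ from the $(1+2\nu)$ used in the proof body and in Theorem~\ref{lem:approx}; your bound $2\nu\,|b|$ dominates both, so nothing is affected.)
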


\begin{proof}
Let $a = \nu - 1 + s_c$ and $b = \wv_c^T\rv_{\xv,y}$. Thus, we want to show that
\begin{align*}
\big\vert\big(\max\big\{0, a + b\big\}\big)^2 - \big(\max\big\{0, a \big\}\big)^2\big\vert \leq (1 + 2\nu)\cdot\big|b\big|.
\end{align*}
Let us consider four possible cases. 
\begin{itemize}
    \item \textbf{Case 1 ($a + b < 0$ and $a < 0$).}~In this case, we have
$$
\big\vert\big(\max\big\{0, a + b\big\}\big)^2 - \big(\max\big\{0, a \big\}\big)^2\big\vert = 0.
$$
\item \textbf{Case 2 ($a + b > 0$ and $a > 0$).}~Note that
\begin{align*}
&\big\vert\big(\max\big\{0, a + b\big\}\big)^2 - \big(\max\big\{0, a \big\}\big)^2\big\vert = \vert (a + b)^2 - a^2 \vert = \vert|b(b + 2a)| \leq (1 + 2\nu)\cdot|b|,
\end{align*}
where the last inequality follows from the fact that $a = \nu - 1 + s_c \leq \nu$, since $s_c \leq 1$.
\item \textbf{Case 3 ($a + b > 0$ and $a < 0$).}~In this case,
\begin{align*}
&\big\vert\big(\max\big\{0, a + b\big\}\big)^2 - \big(\max\big\{0, a \big\}\big)^2\big\vert = \big\vert\max\big\{0, a + b\big\}\big)^2\big\vert \leq |b^2| \leq |b|,      
\end{align*}
where the last equality follows as $|b| = \vert\wv_c^T\rv_{\xv, y}\vert \leq 1$.
\item \textbf{Case 4 ($a + b < 0$ and $a > 0$).}~Note that
\begin{align*}
&\big\vert\big(\max\big\{0, a + b\big\}\big)^2 - \big(\max\big\{0, a \big\}\big)^2\big\vert = \big\vert\max\big\{0, a\big\}\big)^2\big\vert \leq |a|^2 \overset{(i)}{\leq} |b|^2 \leq |b|,
\end{align*}
where $(i)$ follows as by combining $a > 0$ and $a+b < 0$ we obtain the order $b < -a < 0 < a$.
\end{itemize}
Now, by combining all the four case above and using the fact that $\nu \in (1, 2)$, we obtain the desired the result.
\end{proof}

As a final remark, our analysis above assumed that the cosine contrastive loss~\eqref{eq:contrastive-classification} uses \emph{all} labels $c \neq y$ as ``negatives'' for the given label $y$. However, using similar ideas as in~\citep{reddi2018stochastic}, we may easily extend our analysis to the case where the loss uses the \emph{$k$ hardest labels} as negatives (cf.~\eqref{eq:sampled_sp_topk}).


\begin{table*}
	\centering
        \small
	\begin{tabular}{ l | l | l | l | l | l   }
		\hline
		Dataset & Model & Baseline-1  & Baseline-2 & FedAwS &Softmax (Oracle) \\
		\hline
		\cifars &\resnet-8 &   10.7   & 83.3    & 86.3 & 88.4 \\		
		\hline
		\cifars &\resnet-32 & 9.8     & 92.1    & 92.4 & 92.4 \\
		\hline    
		\cifarl  & \resnet-32 &1.0    & 65.1   & 67.9 & 68.0 \\
		\hline
		\cifarl  & \resnet-56 &  1.1   & 67.5   & 69.6 & 70.0 \\
		\hline		
	\end{tabular}
	\caption{Precision@1 (\%) on \cifars~and \cifarl.}
	\label{tb:cifar}
\end{table*}


\section{Experiments}
\label{sec:exp}

We empirically evaluate the proposed FedAwS method on benchmark image classification and extreme multi-class classification datasets. 
In all experiments, both the class embedding $\wv_c$'s and instance embedding $g_{\btheta}(\xv)$ are $\ell_2$ normalized, as we found this slightly improves model quality. 

For FedAwS, we use the squared hinge loss with cosine distance 
to define $\hat{\RCal}_{\rm pos}(\SCal^i)$ at the clients (cf.~Algorithm~\ref{alg:fedaws}):
\begin{align}
    \ell^{\rm pos}(f(\xv), y) =  \max\big(\big\{0, 0.9 - g_{\btheta}(\xv)^T \wv_{y} \big\}\big)^2.
\end{align}
This encourages all positive instance and label pairs $(\xv, y)$ to have dot product larger than 0.9 in the embedding space.

We compare the following methods in our  experiments.
\begin{itemize}[noitemsep,topsep=0pt,leftmargin=0.2in]
\item {\bf Baseline-1}: Training with only positive squared hinge loss. As expected, we observe very low precision values because the model quickly collapses to a trivial solution.
\item {\bf Baseline-2}: Training with only positive squared hinge loss with the class embeddings fixed. This is a simple way of preventing the class embeddings from collapsing into a single point. 
\item {\bf FedAwS}: Our method with stochastic negative mining (cf.~Section~\ref{sec:fedaws-snm}).
\item {\bf Softmax}: An oracle method of regular training with the softmax cross-entropy loss function that has access to both positive and negative labels.
\end{itemize}

\subsection{Experiments on CIFAR}

We first present results on the CIFAR-10 and CIFAR-100 datasets.
We trained ResNets (\resnets)~\citep{he2016deep, he2016identity} with different number of layers as the underlying model. Specifically, we train \resnet-8 and \resnet-32 for \cifars; and train \resnet-32 and \resnet-56 for \cifarl~with the larger number of classes.

From Table~\ref{tb:cifar},
we see that
on both \cifars~and \cifarl, FedAwS almost matches or comes very close to the performance of the oracle method which has access to all labels. 
The first baseline method, training with only positive squared hinge loss does not lead to any meaningful precision values. In this case, as discussed above the model collapses into a degenerate solution.

Interestingly, the naive way of preventing the embeddings from collapsing by fixing the class embeddings as their random initialization gives a much better result. In fact, on \cifars~with \resnet-32, Baseline-2  performs almost identically to the oracle and FedAwS. The reason behind this good performance is that with a smaller number of classes, at a random initialization in a high-dimensional space ($64$ in this case), the class embeddings are already well spread-out as they are almost orthogonal to each other. In addition, the 10 classes of \cifars~are not related to each other. This makes the 10 nearly-orthogonal vectors ideal to be used as-is for class embeddings.


\begin{table*}
	\centering
        \small
	\begin{tabular}{ l | l | l | l | l | l | l  }
		\hline
		Dataset & \#Features & \#Labels & \#TrainPoints &\#TestPoints & Avg. \#I/L & Avg. \#L/I  \\
		\hline
		\amcat  & 203,882     & 13,330    & 1,186,239 & 306,782 & 448.57 & 5.04\\
		\hline    
		\lshtc  & 1,617,899    & 325,056   & 1,778,351 & 587,084 & 17.46 & 3.19\\
		\hline
		\amsmall& 135,909     & 670,091   & 490,449  & 153,025 & 3.99  & 5.45\\
		\hline
	\end{tabular}
	\caption{Summary of the datasets used in the paper. \#I/L is the number of instances per label, and \#L/I is the number of labels per instance.}\label{tb:datasets}
\end{table*}

\begin{table*}
\centering
\begin{tabular}{ccccc|cc}
\multicolumn{2}{c}{}
& \multicolumn{3}{c}{\bf Federated Learning with Only Positives}
& \multicolumn{2}{c}{\bf Oracle} \\
\hline
& &Baseline-1 & Baseline-2 & FedAwS & Softmax & SLEEC \\
\hline
& P@1        & 3.4 & 64.1 & 92.1 & 92.1 & 90.5 \\
\amcat & P@3 & 3.2 & 46.8 & 70.8 & 77.9 & 76.3  \\
& P@5        & 3.1 & 32.6 & 58.7 & 62.3 & 61.5  \\
\hline
& P@1          & 0.0 & 4.3 & 33.1 & 35.2 & 35.1  \\
\amsmall & P@3 & 0.0 & 2.8 & 29.6 & 31.6 & 31.3 \\
& P@5          & 0.0 & 2.2 & 27.4 & 29.5 & 28.6\\
\hline
& P@1        & 7.6 & 7.9 & 37.2 & 54.1 & 54.8  \\
\lshtc & P@3 & 4.5 & 3.4 & 22.6 & 38.8 & 33.4 \\
& P@5        & 2.8 & 2.6 & 16.2  & 29.9 & 23.9  \\
\hline
\end{tabular}
\caption{P@1,3,5 (\%) of different methods on \amcat, \amsmall~ and \lshtc.}
\label{tb:pre-table}
\end{table*}

\subsection{Experiments on extreme-multiclass classification}

\textbf{Datasets.}
We test the proposed approach on standard extreme multilabel classification datasets~\citep{V18}. These datasets have a large number of classes, and therefore are a good representatives of the applications of federated learning with only positive labels. Similar to \citep{reddi2018stochastic}, because these datasets are multi-label, we uniformly sample positive labels to obtain datasets corresponding to multi-class classification problems. The datasets and their statistics are summarized in Table \ref{tb:datasets}.

\textbf{Model architecture.}
We use
a simple embedding-based classification model
wherein
an instance $\xv \in \mathbb{R}^{d'}$, a high-dimensional sparse vector, is first embedded into $\Real^{512}$ using a linear embedding lookup followed by averaging. The vector is then passed through a three-layer neural network with layer sizes $1024$, $1024$ and $512$, respectively. The first two layers in the network apply a ReLU activation function. The output of the network is then normalized to obtain instance embeddings with unit $\ell_2$-norm. Each class is represented as a $512$-dimensional normalized vector.

\textbf{Training setup.} SGD with a large learning rate is used to optimize the embedding layers, and Adagrad is used to update other model parameters. In each round, we randomly select 4K clients associated with 4K labels.

In addition to the methods used in the CIFAR experiments, we also compare the FedAwS with SLEEC \cite{hadsell2006constrastve}.
This is an oracle method of regular training with access to both positive and  negative labels.

\textbf{Results}.
We report precision@$k$ for $k \in \{ 1, 3, 4 \}$ 
in Table \ref{tb:pre-table}. 
On all the datasets, FedAwS largely outperforms the two baseline methods of training with only positive labels. On both \amcat~and \amsmall, it matches or comes very close to the performance of Softmax and SLEEC. Baseline-2 gives reasonable (although quite sub-optimal) performance on \amcat; but does not work on \amsmall~and \lshtc~which have larger number of classes. Thus, randomly initialized class embeddings are not ideal in the situation of many classes, and it is crucial to train the class embeddings with the rest of the model.

\begin{table*}[t!]
\centering
\begin{tabular}{ccc|cccc|ccc}
\hline
& Baseline-1 & Baseline-2  &k = 10 & k = 100 & k = 500 & k = all & $\lambda$ = 1 & $\lambda$ = 10  & $\lambda = 100 $\\
\hline
P@1    & 3.4 & 64.1  & 26.3 & 92.1 & 86.9 & 87.7 & 73.2 & 92.1 & 92.2  \\
P@3    & 3.2 & 46.8  & 21.5 & 70.8 & 66.1 & 69.7 & 50.2 & 70.8 & 71.7 \\
P@5    & 3.1 & 32.6  & 18.2 & 58.7 & 49.3 & 52.2 & 40.4 & 58.7 & 57.9 \\
\hline
\end{tabular}
\caption{P@1,3,5 (\%) of different meta parameters on \amcat.}
\label{tb:meta}
\end{table*}

\textbf{Meta parameters.} There are two meta parameters in the proposed method: the learning rate multiplier of the spreadout loss $\lambda$ (cf.~Algorithm~\ref{alg:fedaws}), and the number top confusing labels considered in each round $k$ (cf.~\eqref{eq:sampled_sp_topk}). 
To make a fair comparison with other methods which do not have these meta parameters, in all of our other experiments in Table \ref{tb:pre-table}, we simply use $k = 10$ and $\lambda = 10$. 

We perform an analysis of these two parameters in Table \ref{tb:meta} on the \amcat~dataset. A very large $k$  leads to worse performance, verifying the benefit and requirement of stochastic negative mining. The reason for the bad performance for a small $k$ is that most of the picked labels are in fact positives in this setting (due to the inherent multi-label nature of the dataset), and over spreading the positive classes is not desirable. Regarding $\lambda$, a relatively large value such as 10 or 100 is necessary to ensure that the class embeddings are sufficiently spreadout.


\section{Conclusion}
We studied a novel learning setting, federated learning with only positive labels, and proposed an algorithm that can learn a high-quality classification model without requiring negative instance and label pairs. The idea is to impose a geometric regularization on the server side to make all class embeddings spreadout. We justified the proposed method both theoretically and empirically. 
For future directions, one can extend the id based class embeddings to the settings where the class embeddings are generated from class-level features. In addition, we notice that negative sampling techniques are crucial to make conventional extreme multiclass classification work. The proposed method is of independent interest in this setting because it replaces negative sampling all together by imposing a strong geometric regularization.

\bibliography{federated}
\bibliographystyle{icml2020}

\end{document}